\DeclareSIUnit\pixel{px}
\title{Rotation Invariance in Floor Plan Digitization using Zernike Moments}
\begin{document}
\author{Marius, Graumann\inst{1,2}\orcidID{0009-0005-1203-8414} \and Jan Marius St\"{u}rmer\inst{1,3}\orcidID{0009-0002-1490-6607}, Tobias Koch\inst{1,4}\orcidID{0000-0003-1279-0209}}
\institute{German Aerospace Center (DLR), Institute for the Protection of Terrestrial Infrastructures \and \email{marius.graumann@dlr.de} \and
\email{jan.stuermer@dlr.de}\and
\email{tobias.koch@dlr.de}}

\maketitle
\begin{abstract}
Nowadays, a lot of old floor plans exist in printed form or are stored as scanned raster images. Slight rotations or shifts may occur during scanning. Bringing floor plans of this form into a machine readable form to enable further use, still poses a problem.
 Therefore, we propose an end-to-end pipeline that pre-processes the image and leverages a novel approach to create a region adjacency graph (RAG) from the pre-processed image and predict its nodes. By incorporating normalization steps into the RAG feature extraction, we significantly improved the rotation invariance of the RAG feature calculation. Moreover, applying our method leads to an improved F1 score and IoU on rotated data. Furthermore, we proposed a wall splitting algorithm for partitioning walls into segments associated with the corresponding rooms.%konsistene Zeitform???
\keywords{graph neural network \and floor plan digitization \and Computer Vision \and Zernike moments \and vectorization} \and indoor spatial data

% floor plan analysis; vectorization; graph neural network; indoor spatial data
% Keywords: Floor plan digitalization, Digital twins, Semantic segmentation, 3D reconstruction, Vectorization
\end{abstract}
\section{Introduction}
\label{sec:introduction} %what about escape and rescue plans?
A floor plan is a drawing that shows the shape, size, and arrangement of rooms in a building as viewed from above. %\cite{cambridge dictionary}. 
Floor plans consist of structural indoor elements, such as walls, windows, doors, stairs, and spatial elements, like rooms and corridors. 
Structural and spatial elements in floor plans play a crucial role while designing, understanding, or remodelling indoor spaces \cite{Liu.2017}, when simulating pedestrians movements and when creating 3D models \cite{Barreiro.2023}\cite{Deshmukh.2023}. % cite Elena and Kai here? 
Floor plans are often generated using computer-aided design (CAD), but are frequently stored as raster images or in printed form\cite{Yang.2018}. During this conversion the images become blurred or rotated, which poses a problem for complex architectural drawings \cite{Kim.2021}. Moreover, the floor plans are no longer machine readable.

Floor plan classification is a difficult task, as there is no standard notation in architectural and engineering companies. Therefore, coloring, line width and used symbols differ \cite{Mace.2010}. 
Furthermore, the classification 
should suffice higher-level geometric and topological conditions, i.e., doors are embedded in walls and walls delimit rooms. 
Moreover, the room layout can depend on the use case, e.g., apartments or office rooms can have different room layouts \cite{Liu.2017}. 
Therefore, learning based approaches are most promising.

Another challenge are complex building layouts. Office buildings, schools, exhibition buildings for example can have complex structures such that some walls are not vertical, horizontal or straight. 
Popular floor plan training data consists mostly of data depicting apartments with nicely behaved structures, e.g., walls are vertical or horizontal. To handle more complex structures that are not represented inside the training set, the classification analysis has to be rotation invariant.
Moreover, rotation invariance is essential, as it enables scanned floor plans to be analyzed more reliably due to possible rotation from imprecise scanning.

In this paper, we present an enhanced method based on the work in \cite{JaeyoungSong.2021} to classify floor plan components and further extract relations, such as room-door connections in a RCG. We explain how certain attributes (moments) are adjusted to achieve translation invariance.  

The paper is structured as follows: First, we introduce current state-of-the-art approaches in section II. Second, we introduce our end-to-end-pipeline consisting of the enhanced feature extraction method (section III).
In sections IV and V, we elaborate on the experimental setup and show the results. Next, we discuss the results and further benefits or drawbacks in section VI, and draw our conclusion in section VII. 

\section{Related work}
\label{sec: related work}
Several approaches towards the classification and analysis of floor plans exist. Therefore, we give an overview about these approaches and then concentrate on work that treats rotation invariant methods based on Zernike moments.
\subsection{Floor plan classification}
To achieve floor plan reconstruction, different methods might be applied. Machine learning based approaches are promising for handling the wide spectrum of different floor plans and perform better in comparison to rule based methods \cite{Summary}.

The task of converting 2D floor plans to semantic 3D models was addressed by \cite{Barreiro.2023}. Their approach utilized Faster-RCNN with a ResNet backbone for window and door detection, resulting in bounding boxes. For wall detection, they employed the FPN architecture paired with a ResNet backbone, which produced segmentation masks for walls. The obtained segmentations and bounding boxes underwent post-processing to vectorize walls, doors, and windows, which were then used to construct the 3D model. On the CubiCasa5K dataset, the IoU of these vectorized components reached 0.8. The method was neither trained nor tested on rotated floor plans.

One common approach for precise classification with deep learning-based methods is image segmentation. In \cite{Huang.2023} image segmentation and detection (YOLOX-based) were jointly trained using the attention based MuraNet to detect windows, walls and doors in floor plans. MuraNet performed better than YOLOv3 and U-Net on the CubiCasa5k dataset. Rotated floorplans were not considered in their research.

In \cite{Ahmed}, symbols such as doors were captured by SURF \cite{SURF} yielding good results with respect to rotation and scale invariance. However, wall extraction  relies on classical computer vision techniques, including erosion and dilation with a 3x3 kernel, which unfortunately lacks scale and rotation invariance.

 Mingxiang Chen et al. introduced the Graph Neural Network (GNN)-based Line Segment Parser (GLSP)  \cite{MingxiangChen.2023}. Their approach leveraged GNNs to predict the class of line segments, such as Door, Wall, or Window.
The authors also proposed a novel embedding technique called Rotated Region of Interest (RRoI) Pooling. This method enables more effective feature extraction for rotated lines by considering their rotational variations in contrast to traditional Region of Interest (RoI) pooling. However, rotation invariance was not explicitly addressed.

Scanned documents are often slightly rotated, scaled and noisy. This issue was tackled by \cite{Khade.2021} by introducing a geometric feature-based approach for floor plan image retrieval that aims to be rotation and scale invariant. It achieved good results on floor plans from the ROBIN dataset that were rotated by $\pm 5$°. 

% In \cite{MingxiangChen.2023} they present a GNN-based Line Segment
% Parser (GLSP) where the GNN predicts the line segments class e.g. Door, Wall or Window. For the embedding used by the GNN they introduced the Rotated Region of Interest (RRoI) Pooling, to extract the feature vector of each line. In comparison to Region of Interest (RoI) this method can handle rotated lines more effectively. They did not explicitly/further address rotation invariance.\\

% \cite{JaeyoungSong.2021} also used an GNN based approach.
% There closed regions are extracted as polygons and become node in a region adjacency graph (RAG) where the node features are Area of polygon, Connectivity of the node and two moments of the polygon. 
% Two nodes are connected if the corresponding Polygon's borders are touching. 
% A GNN classifies the labels of the nodes in the RAG. They compared different GNN architektures. The best performing architectures where GraphSAGE and their newly developed Distance-Weighted Graph Neural Network (DWGNN). They claim to be rotation invariant but only showed experimental results of $90$° rotated floorplans. On a newly labeled subset of CubiCasa5k the F1 scores on indoor element classification where promising\cite{JaeyoungSong.2021}.

% Hu \cite{Hu} showed that geometrical moments are rotation, scale and rotation invariant, qualifying them for good image feature extraction.
% However their basis is not orthogonal hence the geometrical moments poses redundant information and geometrical moments are sensitive to noise.\\

\subsection{Rotation invariance through Zernike moments}

\cite{JaeyoungSong.2021} leveraged Graph Neural Networks (GNNs) for indoor element classification in floor plans. Their method involved constructing a Region Adjacency Graph (RAG) with Zernike moments as the node attributes.
They experimented with various GNN architectures and found that GraphSAGE and their novel Distance-Weighted Graph Neural Network (DWGNN) performed best. Although they claimed rotation invariance, their experimental results were limited to $90$° rotated floorplans.
On a newly labeled subset of CubiCasa5k, \cite{JaeyoungSong.2021}'s approach achieved promising F1 scores for indoor element classification. These findings demonstrate the potential of GNNsand Zernike moments for rotation invariant indoor classification.
Zernike moments are rotation invariant and hence are well suited as features.

Zernike polynoms were introduced by F. Zernike in \cite{Zernike}. Using these polynoms, Zernike moments were defined in image analysis via the general theory of moments. 
In \cite{Teh.1988} various moments were examined and compared regarding their sensitivity to image noise, information redundancy and capability for image representation. Zernike and pseudo-Zernike moments outperformed the other moments in these important aspects of image recognition. This leads to Zernike moments now being used widely in image processing due to their many favourable qualities including rotation invariance.

In \cite{Xiang.2012}, Xiang et al. investigated the practical and theoretical rotation, scale and translation invariance of Zernike moments in image processing.
To achieve this invariance, they proposed a normalization step to be applied prior to calculating Zernike moments. Experimental results demonstrated that this approach retained the invariance of the amplitude of Zernike moments when images were loosely rotated.

\section{Methods}
\label{sec: methods}
Our method builds upon Jaeyoung Song et al.'s \cite{JaeyoungSong.2021} GNN framework, enhancing their work by modifying the feature extraction to improve rotation invariance and embedding it within an end-to-end workflow.

We begin with pre-processing, removing additional information from floor plan images. The pre-processed image is used to create the RAG.
Next, a GNN is applied for the classification of the RAG nodes.
During post-processing the detected indoor elements are converted into usable information for, e.g., room-door connectivity and 3D reconstruction.

%The vectorization is often a precise separation of the different classes due to the descriptive nature of a floor plan. 
% Furthermore the polygons are post processed to gain the exact room geometry, and their connectivity via doors in a Room Connectivity Graph (RCG).
% In this section we will first take a closer look at 
% the pre processing, than 
% the feature extraction and than the post processing.
\begin{figure}[t]
  \centering
  \includegraphics[width=\textwidth]{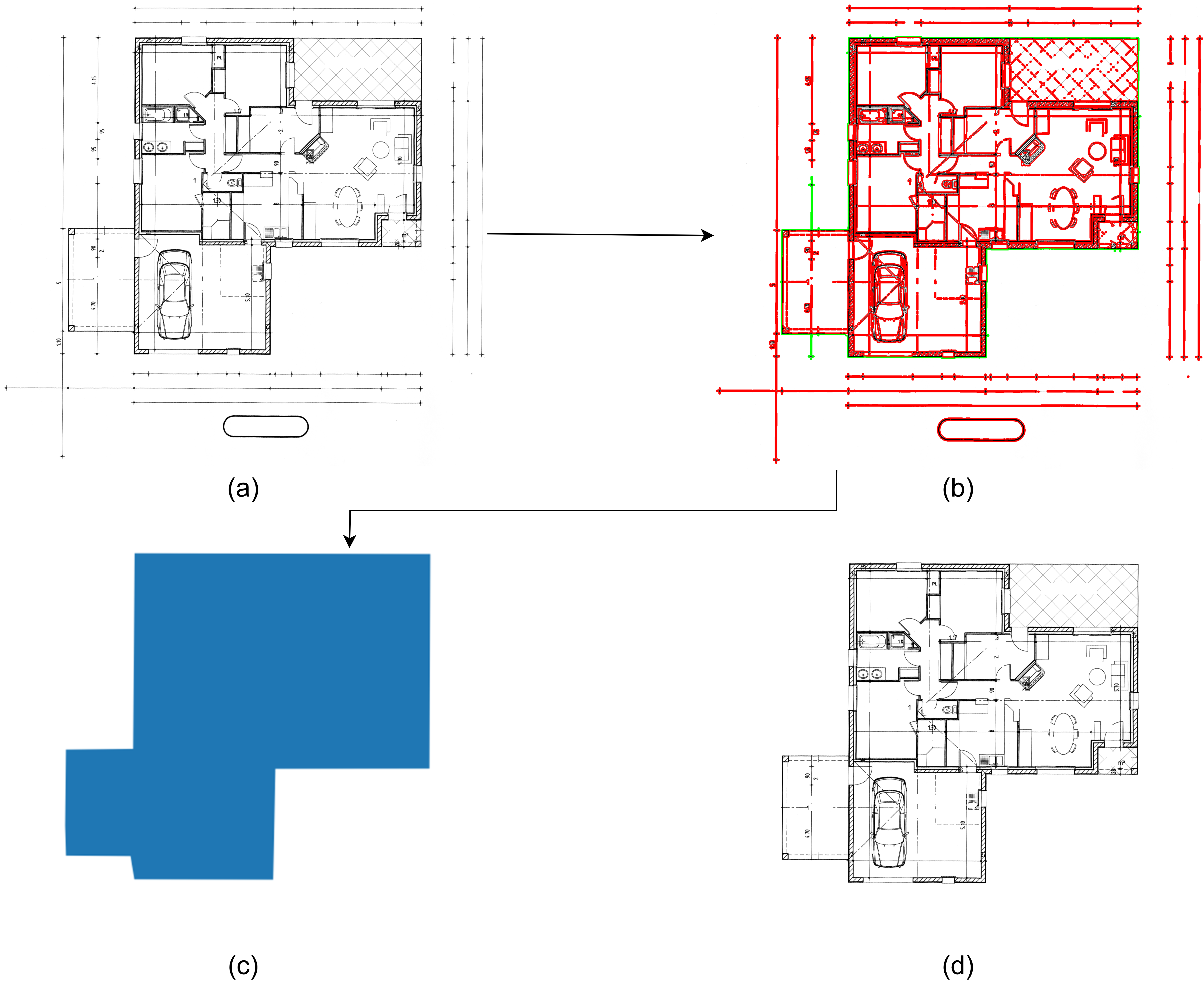}
  \caption{\label{fig:PreProcessing} Filter of building (a) Input image after text removal and dilation (b) red the detected contours and green the largest contour (c) refined polygon of largest contour (d) Filtered building}
\end{figure}
\subsection{Pre-processing}\label{pre-processing}

Floor plans often contain excessive information that can overwhelm the GNN, including text, measurement lines, and legends. To filter out the necessary information, we follow a series of steps.

First, the image is converted into a binary format using a threshold, making the image black and white where the white pixels are the background. Next, EasyOCR is applied to detect the text in the image with bounding boxes that are then coloured white.

To improve contour detection, we remove potential noise and holes by applying a 3x3 dilation kernel, which refines the contour edges. We utilize OpenCV's border-following algorithm to detect the contours in the image Fig. \ref{fig:PreProcessing}(a).% from Topological Structural Analysis of Digitized Binary Images.

After detecting the contours, we select the largest one (see Fig. \ref{fig:PreProcessing}(b)) and interpret it as an exterior of a polygon. To refine this polygon, we first subtract a circle of radius 5 and than add the circle with radius 5 using the Minkowski sum (later referred to as debuff and buff). This process effectively removes small gaps and ensures that the building's outline is represented accurately as shown in \ref{fig:PreProcessing}(c).

Finally, we colour the complement of the polygon white and the resulting image contains only the filtered building Fig. \ref{fig:PreProcessing}(d).
% These pre-processing is rule based and was made to work on the CVC dataset. It has some limitations, for example when filtering for contours there could be two separated parts of the building that needs to be detected. This case would need manual pre processing. 
% The text removal can change polygons (see fig.. ) in Discussion?

\subsection{RAG generation and normalization}\label{RAG}
\begin{figure}[tb]% [trim={left bottom right top},clip]
  \centering
  \includegraphics[width=.47\textwidth]{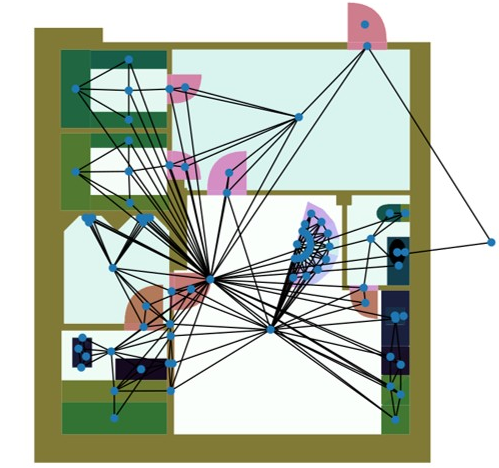}
  \caption{\label{fig:RAG}  RAG of Floorplan. Every polygon has a unique color and is represented by a blue node inside the graph. The node is at the center of mass of the polygon. Two nodes are connected if the corresponding polygons are adjacent.}
\end{figure}
\begin{figure}[tb]
  \centering
  \includegraphics[width=\textwidth]{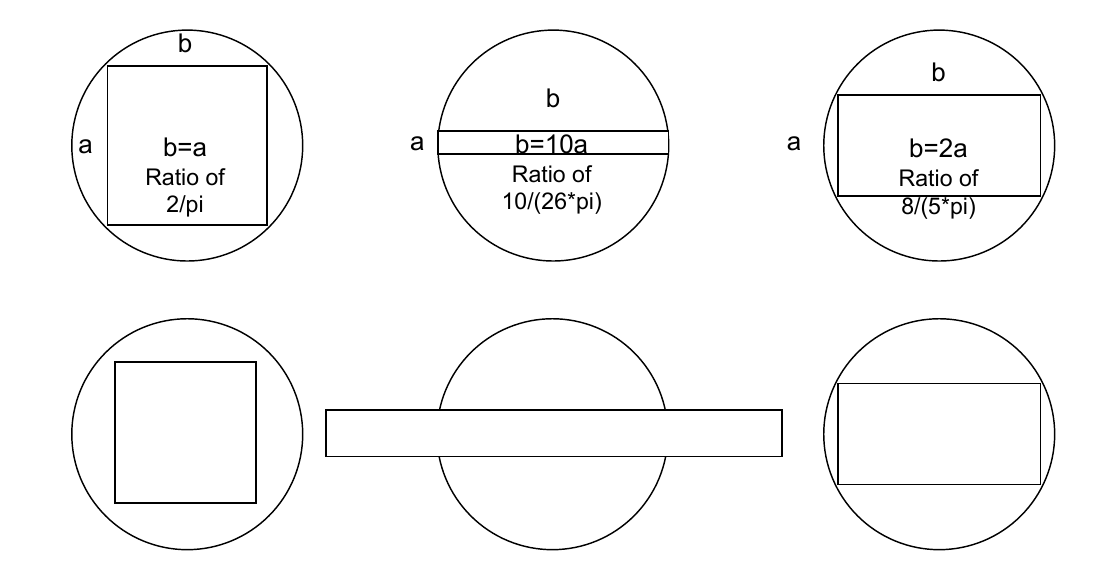}
  \caption{\label{fig:lemma}  Top: Rectangles with different sides a, b. 
  Bottom: The Polygons are scaled with $F_P$. The polygons with invariant ratio greater than $c=0.5$ are inside the circle with radius r after scaling.}
\end{figure}   
The image is vectorized, where each vector (further referred to as polygon) becomes a node in the RAG.
In accordance with \cite{JaeyoungSong.2021}, for every node and its corresponding polygon, we extract the connectivity of the node within the RAG (see Fig. \ref{fig:RAG}) and determine the area of the corresponding polygon. Additionally, we extract the amplitude of Zernike moments up to order $n_{max}\in\bbbn$ with non-negative repetition $m\geq 0$.

Furthermore, we apply the following normalization steps to the polygons to ensure invariance.
 For each polygon $P$ from the vectorization, we centralize $P$ such that the centroid of $P$ is at the origin of the image to assure translation invariance as demonstrated in \cite{Xiang.2012}.
The method from \cite{Xiang.2012} is applied to achieve scale invariance, i.e., each polygon $P$ with area $A_P$ is scaled with a factor $F_P=\sqrt{\frac{A}{A_P}}$, such that $P'=P\cdot F_P$. This scaling process maintains the centroids position at the origin of the image, while ensuring that every polygon $P'$ has the same area $A\in\bbbr^+$.

However, since we do not apply a uniform scaling factor to the entire image but use a unique scaling factor $F_P$ for each polygon $P$, we must ensure that the Zernike moments of the scaled polygon $P'$ are computed correctly. This requires that $P'$ lies within the circle with radius $r$, on which the Zernike moment is calculated.

Let $A=cr^2\pi$. By Lemma \ref{lemma2}, $P'$ lies within the circle with radius $r$ if and only if $c\leq \frac{A_P}{R^2_P\pi}$, where $R_P$ is the radius of the smallest circle centered at the origin containing $P$, as defined in Definition \ref{def}.
The selection of $c$ is a trade-off, as selecting $c$ small weakens the requirement for the whole area to be captured by $C_r$, but also risks down scaling and information loss in practice.

In Fig. \ref{fig:lemma} there is an illustration of the invariant ratio in form of rectangular polygons with sides $a,b$ with different ratios, left to right: a=b with invariant ratio $\frac{2}{\pi}\thickapprox 0.64$,
  a=2b with invariant ratio $\frac{8}{5\pi}\thickapprox 0.51$ and
  a=10b with invariant ratio $\frac{10}{26\pi}\thickapprox 0.12$.
   The polygons are scaled with  $F_P = \sqrt{\frac{cr^2\cdot\pi}{A_P}},\: c=0.5$ where $A_P$ is the area of the polygon to scale and r the radius of the circle. We can see that polygons with an invariant ratio greater than $c=0.5$ are completely captured inside the circle with radius $r$, hence the Zernike moments of the whole polygon would be captured.
% When calculating Zernike moments we normalize the polygons. In this normalization step the area of a polygon is scaled to a target area $A$. The Zernike moment only captures the polygon on the circle $C_r$. For every image we fix this $r=min(x_{max},y_{max})$ to be the same in all the Zernike Moments calculations for this image. 
% However, this approach poses a problem: for any target area $A$, there will always exist a polygon $P$ such that when scaled to area $A$, the resulting polygon $P'$ does not lie entirely within $C_r$. This is because the scaling process may cause part of the polygon to fall outside the circle.
% To address this issue, we need to establish a criterion that ensures the scaled polygon lies entirely within $C_r$.
\begin{definition}\label{def}
Let $M\subset \bbbr^n$ closed. We define the radius of the smallest circle centered at the origin that contains $M$ as 
$$R_M \coloneqq min\{r \in \bbbr| \lVert v\rVert\leq r\: \forall v \in M\}.$$
Let $M\subset\bbbr^n$ and $c\in\bbbr$ with $c\cdot M=cM\coloneqq \{cv|\: v\in M \}$.\\
$\lambda$ is the Lebesgue measure.
\end{definition}

\begin{lemma}\label{lemma1}
Let $r,F\in\bbbr^+$ and $M\subset \bbbr^n$ closed,  Lebesgue measurable with $\lambda(M)>0$. It holds that  $$F\cdot M\subset C_r\iff F\leq\frac{r}{R_M}. $$
\end{lemma}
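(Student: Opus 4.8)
The plan is to turn the set inclusion $F\cdot M\subset C_r$ into a single scalar statement about norms and then read off the equivalence from the definition of $R_M$. Throughout I write $C_r=\{w\in\bbbr^n:\lVert w\rVert\leq r\}$ for the closed ball of radius $r$ centred at the origin, and I recall from Definition~\ref{def} that $R_M$ is the least radius satisfying $\lVert v\rVert\leq R_M$ for every $v\in M$; equivalently $R_M=\sup_{v\in M}\lVert v\rVert$, which is finite and attained because $M$ is closed and the defining minimum exists. The one preliminary fact worth isolating before anything else is that $R_M>0$, and this is exactly where the hypothesis $\lambda(M)>0$ is used: if we had $R_M=0$ then $M\subseteq\{0\}$, a Lebesgue-null set, contradicting $\lambda(M)>0$. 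Knowing $R_M>0$ makes $r/R_M$ well defined and positive, so the inequality on the right-hand side of the claim is meaningful.

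For the forward implication I would assume $F\cdot M\subset C_r$ and fix an arbitrary $v\in M$. Then $Fv\in C_r$, so $F\lVert v\rVert=\lVert Fv\rVert\leq r$, and since $F>0$ this rearranges to $\lVert v\rVert\leq r/F$. As $v\in M$ was arbitrary, $r/F$ is an upper bound for $\{\lVert v\rVert:v\in M\}$, and because $R_M$ is by definition the \emph{smallest} such bound we obtain $R_M\leq r/F$, i.e.\ $F\leq r/R_M$. For the converse I would assume $F\leq r/R_M$ and again fix $v\in M$. By definition of $R_M$ we have $\lVert v\rVert\leq R_M$, whence $\lVert Fv\rVert=F\lVert v\rVert\leq F\,R_M\leq r$, so $Fv\in C_r$; since $v$ was arbitrary this gives $F\cdot M\subset C_r$.

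I do not expect a genuine obstacle here: the statement is essentially the definition of $R_M$ transported through the scaling $v\mapsto Fv$, and each direction is a one-line norm estimate. The only points that require mild care are using $F>0$ to move the factor in and out of the inequality without reversing it, and invoking $\lambda(M)>0$ to guarantee $R_M>0$ so that dividing by $R_M$ is legitimate. The closedness (hence boundedness) of $M$ plays only the background role of ensuring $R_M$ exists as stated in Definition~\ref{def}.
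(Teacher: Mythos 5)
Your proof is correct and follows essentially the same route as the paper: both reduce the inclusion to the scalar inequality $\lVert Fv\rVert = F\lVert v\rVert \leq r$ for all $v\in M$ and invoke the minimality in the definition of $R_M$, with $\lambda(M)>0$ guaranteeing $R_M>0$. The only cosmetic difference is that you argue the two implications separately while the paper writes a single chain of equivalences.
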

\begin{proof}
Note that $\lambda(M)>0\Rightarrow R_M>0$.
\begin{align*}
    F\leq \frac{r}{R_M}&\iff \frac{r}{F}\geq R_M\\
    \overset{Definition\: \ref{def}}&{\iff} \lVert v\rVert \leq \frac{r}{F}\quad\forall v\in M\\
     &\iff \lVert Fv\rVert=F\lVert v\rVert \leq r \quad\forall v\in M\\
     &\iff F\cdot v\in C_r\quad\forall v\in M
     \iff F\cdot M\subset C_R
\end{align*}\qed
\end{proof}
\begin{lemma}\label{lemma2}
Let $c\in\bbbr^+,\: M\subset I \subset\bbbr^2$ closed and Lebesgue measurable with $\lambda(M)>0$. For $F=\sqrt{\frac{c\cdot r^2\pi}{\lambda(M)}}$ it holds that
$$F\cdot M\subset C_r \iff c\leq  \frac{\lambda(M)}{R_M^2\pi}.$$
\end{lemma}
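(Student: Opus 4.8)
The plan is to reduce the statement directly to Lemma~\ref{lemma1} by substituting the explicit scaling factor $F=\sqrt{\frac{c\,r^2\pi}{\lambda(M)}}$ and then checking that the threshold condition $F\leq\frac{r}{R_M}$ supplied by Lemma~\ref{lemma1} is algebraically equivalent to $c\leq\frac{\lambda(M)}{R_M^2\pi}$. First I would record that every quantity in play is strictly positive: $c,r,\pi>0$ by hypothesis, and $\lambda(M)>0$ guarantees both that $F$ is well defined and strictly positive and (as already noted in the proof of Lemma~\ref{lemma1}) that $R_M>0$, so the fractions $\frac{r}{R_M}$ and $\frac{\lambda(M)}{R_M^2\pi}$ are meaningful. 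Since $F\in\bbbr^+$ and $M$ is closed, Lebesgue measurable with $\lambda(M)>0$, the hypotheses of Lemma~\ref{lemma1} are satisfied, which yields $F\cdot M\subset C_r\iff F\leq\frac{r}{R_M}$.

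It then remains to establish the equivalence $F\leq\frac{r}{R_M}\iff c\leq\frac{\lambda(M)}{R_M^2\pi}$. Starting from $\sqrt{\frac{c\,r^2\pi}{\lambda(M)}}\leq\frac{r}{R_M}$, I would square both sides; because both sides are nonnegative and squaring is strictly monotone on $[0,\infty)$, this is a genuine equivalence with no spurious solutions introduced. This gives $\frac{c\,r^2\pi}{\lambda(M)}\leq\frac{r^2}{R_M^2}$, and cancelling the common positive factor $r^2$ and multiplying through by the positive quantity $\frac{\lambda(M)}{\pi}$ produces exactly $c\leq\frac{\lambda(M)}{R_M^2\pi}$. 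Chaining this with the equivalence delivered by Lemma~\ref{lemma1} closes the argument.

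I expect no genuine obstacle here: essentially all of the geometric content is already packaged into Lemma~\ref{lemma1}, and what remains is a single-variable rearrangement. The only point that requires care is justifying that squaring preserves the direction of the inequality in \emph{both} directions, which is precisely why the positivity bookkeeping in the first step is not merely cosmetic. Should any of these quantities be permitted to vanish or change sign, the equivalence could fail, but the standing assumptions $\lambda(M)>0$ and $c,r\in\bbbr^+$ rule this out.
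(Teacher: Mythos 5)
Your proposal is correct and follows essentially the same route as the paper: apply Lemma~\ref{lemma1} to reduce the containment to $F\leq\frac{r}{R_M}$, then square and rearrange to obtain $c\leq\frac{\lambda(M)}{R_M^2\pi}$. Your positivity bookkeeping is a bit more explicit than the paper's, but the argument is identical in substance.
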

\begin{proof}
\begin{align*}
F\cdot M\subset C_r \overset{Lemma\: \ref{lemma1}}&{\iff} \sqrt{\frac{c\cdot r^2\pi}{\lambda(M)}} = F \leq \frac{r}{R_M}\\
&\iff \frac{\pi c}{\lambda(M)}\leq \frac{1}{R_M^2}\iff c\leq  \frac{\lambda(M)}{R_M^2\pi}
\end{align*}\qed
\end{proof}
Note: Choosing $c\cdot r^2\pi = A$ we have
$$\lambda(F\cdot M)=\lambda(M)\cdot F^2=\lambda(M)\frac{cr^2\pi}{\lambda(M)}=A$$
according to Lebesgue measure properties in $\bbbr^2$.
% (In den bisherigen experimenten $c=\frac{\pi}{c'}$)\\
% Man kann es auch so betrachten, dass c die ratio zwischen der Fläche und der Kreisfläche einschränkt. 
% Christoph: $\lVert \rVert$Unklar was für eine Norm (tatsächlich egal!) und was ist $C_r$.

\subsection{Node classification}
The RAG is fed into a GNN which predicts the labels of the nodes of the RAG. Details can be found in \cite{JaeyoungSong.2021}. This results in a labeled graph consisting of the classes room, wall, door, window, stair, object, porch and outer space.

\subsection{Post-processing}
We apply several post-processing steps to refine the labeled RAG to be applicable for different use cases.

\subsubsection{Room Connectivity}
\begin{figure}[tb]
  \centering
  \includegraphics[width=\textwidth]{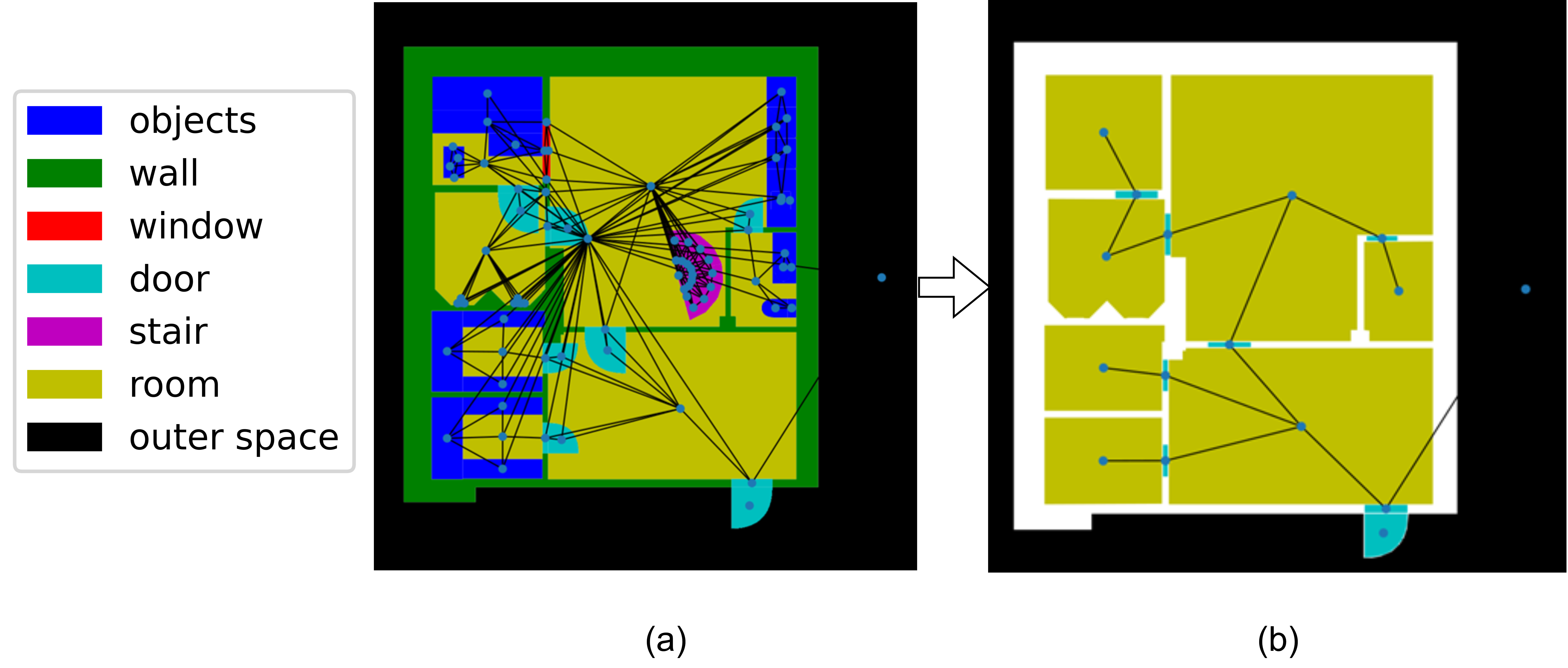}
  \caption{\label{fig:room}  (a) RAG with labels (b) Room connectivity Graph}
\end{figure}
\subsubsection{Door Splitting}\label{DoorSplitting}
For some applications, such as pedestrian flow simulations or floor plan recommendation systems, further knowledge of the relations between rooms is required. This information can be stored in a room connectivity graph, which contains the complete rooms, doors, and outer space polygons as its nodes. Rooms should be connected to doors if a room can be entered and exited through that door and the same holds for the outer space. In Fig. \ref{fig:room} the input RAG and the created room connectivity graph is shown.

First, we recreate the room from labeled polygons. As some objects, such as toilets, showers etc., are part of a room, we need to merge these objects with the room they are located in. The same applies to stairs and door swing areas (see Door Splitting).

Iterative objects, stairs, and door swings polygons are merged into room polygons, if they are connected in the RAG. We then update the new room boundaries and repeat these steps until no changes occur. Note that the object, stair and door swing nodes do not have to be deleted, they can be used as information associated with the room.

Next, we approximate the new room polygons using the Douglas-Peucker algorithm, which helps straighten diagonal lines that were pixel precise before. We also buff and debuff the room polygons to remove small holes inside them, which result from noise, imprecise vectorization or merging. The outer wall is created from the interior of the outer space polygon.

Usually, doors in floor plans are displayed with the door swing area. The door swing area is part of the room, while the other part is enclosed inside the wall. If two door polygons are neighboring in the RAG, we label the larger door polygon as the door swing area and the smaller part as the door part that is embedded within the wall.

\subsubsection{Wall Splitting}
\begin{figure}[tb]
  \centering
  \includegraphics[width=\textwidth]{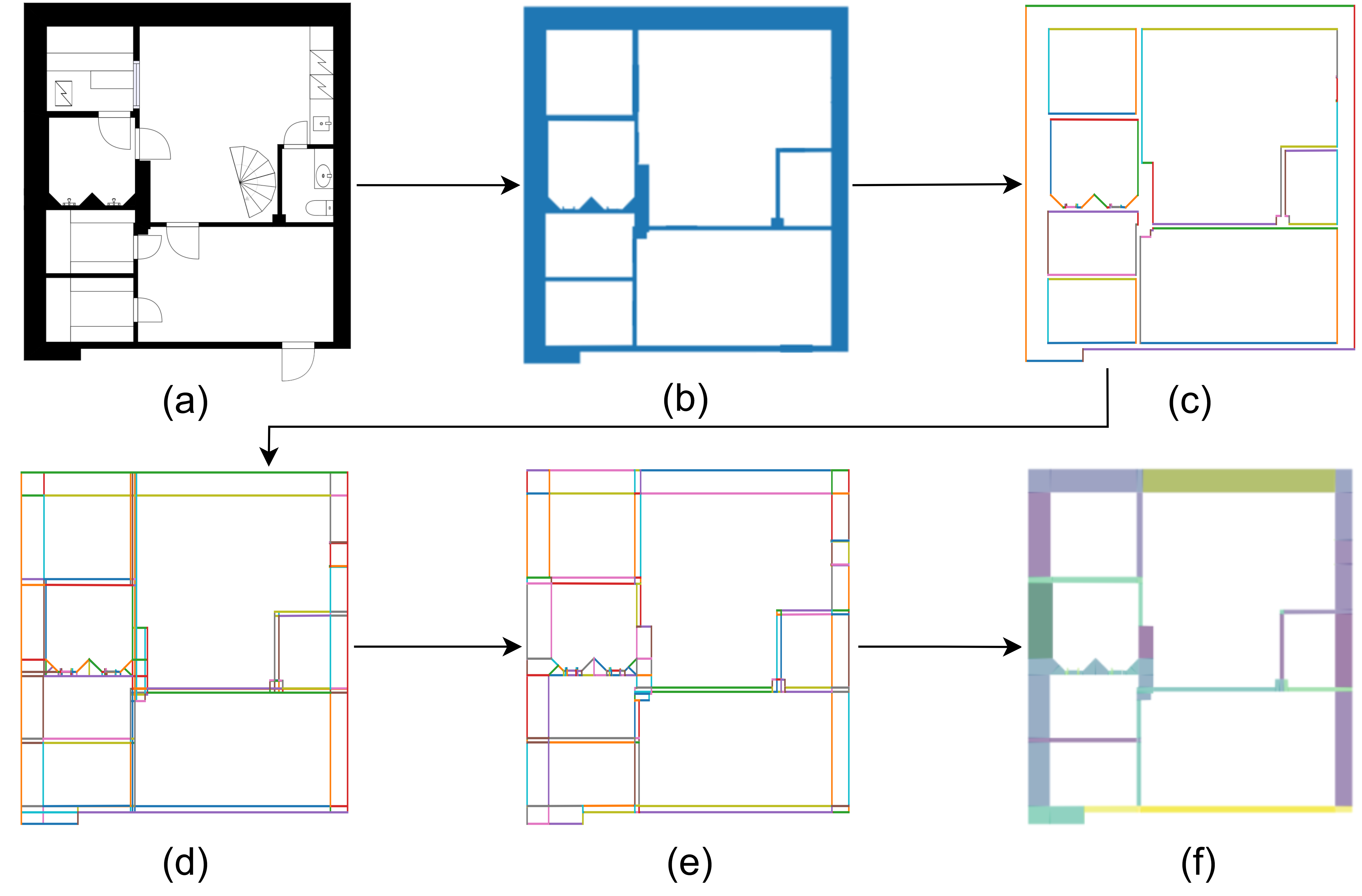}
  \caption{\label{fig:wall splitting}Demonstration from input image to splitted walls  (a) Input image (b) Wall polygon (c) Interior and exterior linear rings of polygon (d) Added separation lines (e) Crossing lines removed (f) polygon creation of remaining lines}
\end{figure}
\begin{algorithm}[t]
\caption{Separation lines}\label{alg:Seperation lines}
\begin{algorithmic}[1]
\State Let $P_{wall}$ be the multi polygon of all wall polygons and linestrings be the set of line strings from the interior and exterior of the polygons in $P_{walls}$.
\For{line\_string in linestrings}\label{sep:linestrings}
    \For{point in line\_string(points)} \label{sep:points}
        \State candidates $\gets$ list()
        \For{line\_string2 in linestrings}\label{sep:lines}
        \For{line in line\_string2(lines)}
            \State shortest\_line $\gets$ shortestline(point, line)\label{sep:shortest}
            \If{shortest\_line $\subset P_{wall}$}\label{sep:filter}
            \If{filter(args)} %shortest\_line, line, point, linestring
            \State candidates append shortest\_line 
            \EndIf
            \EndIf\label{sep:filter_end}
        \EndFor
        \EndFor\label{sep:lines_end}
        \State sort candidates ascending \label{sep:sort}
        \State best\_candidate $\gets$ candidates[0] 
        \State  separation\_lines append $\gets$ best\_candidate \label{sep:best}
        \For{candidate in candidates}\label{sep:2best}
            \If{angle(candidate, best\_candidate) $> 40$}
            \State  separation\_lines append $\gets$ candidate
            \State break
            \EndIf
        \EndFor \label{sep:2best_end}
    \EndFor
\EndFor \label{sep:linestrings_end}
\State  separation\_lines $\gets$ remove\_crossing\_lines(separation\_lines append)\label{sep:cross}
\State return separation\_lines\label{sep:return}
\end{algorithmic}
\end{algorithm}

First, we want to obtain the entire wall, including windows and doors embedded within it. Doors and windows are contained within a wall. As described in Room Connectivity, we follow the same procedure to merge the door and window nodes into the wall nodes. The resulting wall is then approximated using the Douglas-Peucker algorithm. A resulting wall polygon is shown in Fig. \ref{fig:wall splitting} (b).

To be able to store information about different parts of the wall regarding materials or width, which are useful for several applications, such as the generation of 3D or even Building Information Models (BIM), the wall polygons have to be split in a reasonable manner. To achieve this, we must split the wall into parts associated with individual rooms and load-bearing walls.
To accomplish this, polygons are detected by finding separation lines and constructing the polygon out of these, as implemented in the Algorithms \ref{alg:Seperation lines} and \ref{alg:Polygon construction} and illustrated in Fig. \ref{fig:wall splitting}.

The wall polygons are merged into a multi-polygon $P_{wall}$ (see Fig. \ref{fig:wall splitting} (b)). Each polygon can be defined by its exterior and interior linestrings as shown in Fig. \ref{fig:wall splitting} (c). For every point on each linestring in $P_{wall}$ (Algorithm \ref{alg:Seperation lines}, lines \ref{sep:linestrings}-\ref{sep:linestrings_end}), we search for lines where the polygon should be split. 
For every line contained in the interior or exterior of the polygon (lines \ref{sep:lines}-\ref{sep:lines_end}), the shortest line between this line and point is determined and called $shortest\_line$ (line \ref{sep:shortest}).
If the  $shortest\_line$ is inside the wall polygon and fulfills the $filter$ criteria, the line is a potential separation line and gets appended to $candidates$ (lines \ref{sep:filter}-\ref{sep:filter_end}). The $filter$ function can depend on the use case, we have chosen the $filter$ function to return true if $shortest\_line$ is (almost) orthogonal to $line$ or any of the two lines in $line\_string$ that contains $point$.

The candidates are sorted ascending according to the length of the lines (line \ref{sep:sort}) and the shortest line $best\_candidate$ becomes a separation line (line \ref{sep:best}).
Then, we search for the next longer line inside $candidates$ that has an angle of at least $40°$ to $best\_candidate$. If this line exists it also becomes a separation line (lines \ref{sep:2best}-\ref{sep:2best_end}).
After all the points have completed this procedure Fig. (see \ref{fig:wall splitting} (d)), we proceed with the last steps to remove the larger line from any two lines in $separation\_lines$ that cross and return the remaining $separation\_lines$ (lines \ref{sep:cross}, \ref{sep:return}) (see Fig. \ref{fig:wall splitting} (e)).

The resulting wall parts can easily be associated with their corresponding rooms, using the RAG with updated neighbourhoods.

\subsubsection{Polygon Construction}
\begin{algorithm}[t]
\caption{Polygon construction}\label{alg:Polygon construction}
\begin{algorithmic}[1]
\State lines $\gets$ all lines from linestrings of $P_{wall}$
\While{$|P_{wall}| > 1$}\label{poly:while}
    \State added\_lines $\gets\:$ list()
    \State $P\in P_{wall}:\: dist(P, lines)>\epsilon$\label{poly:point}
    \State sort lines ascending according to distance to point\label{poly:lines}
    \For{line in lines}
        \State is\_crossing $\gets$ false
        \For{added\_line in added\_lines}
            \If{shortest\_line(P, line) intersects added\_line}
                \State is\_crossing $\gets$ true
            \EndIf
        \EndFor
        \If{not is\_crossing}
            added\_lines append(line)
        \EndIf
    \EndFor \label{poly:lines_end}
    \For{line in added\_lines}\label{poly:lines_rem}
        \State is\_crossing $\gets$ false
        \For{added\_line in added\_lines}
        \State mp $\gets$ middle point of line
            \If{shortest\_line(P, mp) intersects added\_line}
                \If{line $\neq$ added\_line}
                    \State is\_crossing $\gets$ true
                \EndIf
            \EndIf
        \EndFor
        \If{ is\_crossing}
            added\_lines remove(line)
        \EndIf
    \EndFor\label{poly:lines_rem_end}
    \State new\_poly $\gets$ convex\_hull(added\_lines)\label{poly:poly}
    \State $P_{wall} \gets P_{wall}-new\_poly$\label{poly:subtract}
\EndWhile\label{poly:while_end}
\end{algorithmic}
\end{algorithm}

Furthermore, these new lines (see Fig. \ref{fig:wall splitting} (e)) have to be assigned correctly to the respective polygons. Therefore, according to Algorithm \ref{alg:Polygon construction}, we select a random point inside $P_{wall}$ (line \ref{poly:point}).
We then iteratively add lines closest to this point, provided that the shortest line between the point $P$ and the $line$ does not intersect with any previously added line (lines \ref{poly:lines}-\ref{poly:lines_end}).

For every $line$ in $added\_lines$, we check that the line between the point $P$ and the midpoint $mp$ of the $line$ does not intersect with any other line in $added\_lines$. If it does intersect, we remove this $line$ from $added\_lines$ (lines \ref{poly:lines_rem}-\ref{poly:lines_rem_end}). 
The convex hull of these lines constitutes the new polygon (line \ref{poly:poly}). This polygon is subtracted from $P_{wall}$ (line \ref{poly:subtract}), and the process is repeated until $P_{wall}$ is empty (lines \ref{poly:while}-\ref{poly:while_end}). An example of the resulting polygons is shown in Fig. \ref{fig:wall splitting} (f).

\section{Experiments}
To evaluate the performance of the modified features, we conducted a series of experiments on different datasets. We provide some details about the datasets used and the implementation of the experiments.
\subsection{Dataset}
For the experiments we used the CubiCasa and CVC datasets.
\subsubsection{CubiCasa}
We used the same dataset as in \cite{JaeyoungSong.2021}, i.e., 400 high-quality floor plans from the CubiCasa5K dataset, which contain different apartment floor plans. The dataset contains SVG formatted floor plan images with vectorized polygons, where a class is assigned to each polygon. These classes are structural elements (walls, windows, doors, and stairs), spatial elements (rooms, porches, and outer space), and objects. When comparing the CubiCasa dataset with the CVC dataset we relabel objects, stairs and porches as rooms because the CVC dataset does not contain these relabeled classes. To the best of our knowledge, this is the only dataset where the vectorized image is completely labeled. 

We split the CubiCasa dataset into fixed training (280), test (80), and validation (40) sets. 
To create a rotated test dataset, we augmented the test set images with a rotation of 45°. To fit the rotated image onto the canvas, the canvas had to be enhanced, leading to higher resolution images. The additional pixels were set to be white. The rotated images were vectorized, and the RAG features were calculated. The newly created polygons were labeled with the label that had the highest IoU value of the rotated labeled polygons.                        
\subsubsection{CVC}
We further used the CVC dataset, which contains 122 scanned floor plan documents divided in 4 different subsets regarding their origin and style. The labels are in SVG format and the classes are structural elements (rooms, walls, doors, windows, parking doors and room separations). We interpret parking doors as doors and ignore the room separation labels. Some areas on the floor plan are not labeled, e.g., stairs.

The floor plans contain additional information, e.g., rooftops, text, measure lines. The dataset was pre-processed using the pre-processing steps described in \ref{pre-processing}.
Due to poor vectorization into the correct indoor elements the data was not suitable for use as training data. The entire 122 floor plans were used as a test dataset.
% The svg does not label the whole floor plan(some rooms are not labeled, stairs are unlabeled areas, the labeled areas are not concise), contains holes and the labeled vectors from the svg can have an non empty cut e.g. some areas are labeled as doors and rooms. All these points make it challanging to obtain a labeled vectorization from the svg labels of the floor plan. \\
% To obtain a labeled vectorization, we first vectorize the raster image. Than the polygons from the vectorization are labeled with the max IoU from the svf labels. If a polygon has no overlap with any svg vector the polygon is labeled as outer space. The obtained dataset has many miss labels due to the before mentioned challanges. The vectorization to begin with has flaws. E.g. Some doors are not closed areas and fail to be converted into a polygon. 
\subsection{Implementation}
We used the DWGNN with an LSTM aggregator and six layers. We trained for 40 epochs with a learning rate of 0.01 and batch size of 1. We trained on the CubiCasa training dataset with different invariant ratios of $100$, $8^{-1}$ and $80^{-1}$ and one without the proposed normalization.
% and the predicted classes object, stair and porch where converted to the room class when predicting on the CVC set.  
The features contained 16 Zernike moments (up to order 6). The evaluation occurred before the post-processing steps.
Zernike moments were calculated with the functions provided in \cite{mahotas}.
The hardware characteristics used for the experiments were an Intel Xeon Platinum 8260 CPU, an Nvidia Quadro RTX 8000 GPU and 192 GB of RAM.

\section{Results}

\begin{table}[tb]\centering
    \caption{Comparison of different invariant ratios on different testing data sets. The model was trained on the CubiCasa dataset.}
    \begin{tabular}{*{11}{|l}|}
    \hline
    \multicolumn{2}{|c}{}&\multicolumn{3}{|c}{CubiCasa}& \multicolumn{3}{|c|}{CubiCasa rotated}&
    \multicolumn{3}{c|}{CVC} \\
    \hline
    \multicolumn{2}{|c|}{invariant ratio} & 
    300&$3^{-1}$&$80^{-1}$&300&$3^{-1}$&$80^{-1}$&300&$3^{-1}$&$80^{-1}$  \\ 
    \hline \hline
    \multirow{5}{*}{F1}
    & wall          &\textbf{97.96}  &95.06  &94.6   &82.62  &90.11  &\textbf{92.24}    & 33.98 &\textbf{45.67}  &43.08\\ 
    &window         & \textbf{98.32} &97.46  &91.18  &76.79  &\textbf{79.41}  &78.19    &7.17   &3.72   &\textbf{12.69}\\ 
    &door           & 93.24 &94.95  &\textbf{95.24}  &79.71  &\textbf{87.65}  &83.61    &2.14   &1.87   &3.04\\ 
    &room           & 88.98 &93.62  &\textbf{95.37}  &67.34  &91.84  &\textbf{96.53}    &51.86  &76.83  &\textbf{78.51}\\ 
    &outer space    & 87.58 &93.98  &\textbf{96.57}  &87.08  &98.33  &\textbf{99.61}    &81.18  &94.01  &\textbf{94.81}\\ 
    \hline
    \multirow{5}{*}{IoU}
    & wall          &\textbf{95.99}  &90.58  &89.76  &70.39  &82.01  &\textbf{86.6}     &20.47  &29.59  &27.46\\ 
    &window         & \textbf{96.7}  &95.05  &83.78  &62.32  &\textbf{65.84}  &64.2     &3.72   &1.90   &\textbf{6.78}\\ 
    &door           & 87.34 &90.39  &\textbf{90.9}   &66.27  &\textbf{78.01}  &71.83    &1.08   &\textbf{9.40}   &1.54\\ 
    &room           & 80.14 &88.01  &\textbf{91.15}  &50.76  &84.91  &\textbf{93.29}    &35.01  &62.38  &\textbf{64.62}\\ 
    &outer space    & 77.91 &88.65  &\textbf{93.37}  &77.11  &96.72  &\textbf{99.22}    &68.32  &88.70  &\textbf{90.14}\\ 
    \hline
    \multicolumn{2}{|l|}{Average F1} & 94.62  &\textbf{95.27}  &94.1 &76.61  &87.25  &\textbf{87.64}    &23.79  &32.02  &\textbf{34.33}\\
    \hline
    \end{tabular}
    \label{tab:ratio}
\end{table}

\begin{table}[tb]\centering
    \centering
    \caption{Comparison of indoor classification with and without the proposed normalization steps on the CubiCasa rotated dataset. The model was trained on the CubiCasa data set.}
\begin{tabular}{|l|ll|ll|}
\hline
            & \multicolumn{2}{l|}{$80^{-1}$}                       & \multicolumn{2}{l|}{no normalization}                \\ \cline{2-5} 
            & \multicolumn{1}{l|}{F1}             & IoU            & \multicolumn{1}{l|}{F1}             & IoU            \\ \hline
objects     & \multicolumn{1}{l|}{\textbf{61.10}} & \textbf{43.98} & \multicolumn{1}{l|}{42.61}          & 27.08          \\ \hline
wall        & \multicolumn{1}{l|}{\textbf{92.18}} & \textbf{85.50} & \multicolumn{1}{l|}{88.72}          & 79.78          \\ \hline
window      & \multicolumn{1}{l|}{\textbf{78.25}} & \textbf{64.72} & \multicolumn{1}{l|}{24.82}          & 14.17          \\ \hline
door        & \multicolumn{1}{l|}{\textbf{83.44}} & \textbf{71.59} & \multicolumn{1}{l|}{36.81}          & 22.56          \\ \hline
stair       & \multicolumn{1}{l|}{13.32}          & 7.13           & \multicolumn{1}{l|}{\textbf{34.43}} & \textbf{20.80} \\ \hline
room        & \multicolumn{1}{l|}{\textbf{94.54}} & \textbf{89.64} & \multicolumn{1}{l|}{87.48}          & 77.74          \\ \hline
porch       & \multicolumn{1}{l|}{\textbf{46.24}} & \textbf{30.07} & \multicolumn{1}{l|}{40.02}          & 25.02          \\ \hline
outer space & \multicolumn{1}{l|}{\textbf{99.61}} & \textbf{99.22} & \multicolumn{1}{l|}{99.15}          & 98.32          \\ \hline
Average     & \multicolumn{1}{l|}{\textbf{67.01}} & \textbf{56.09} & \multicolumn{1}{l|}{50.7}           & 38.16          \\ \hline
\end{tabular}
\label{tab:paper}
\end{table}
In Table \ref{tab:ratio} and \ref{tab:paper} the average was taken over all classes except the outer space class.
In Table \ref{tab:ratio} we see the comparison of classification with different invariant ratios for the normalization step on different datasets.
The average F1 scores on the CubiCasa dataset range from $94.1\%$ to $95.27\%$ having very similar performance with different invariant ratios. The wall and window detection in the CubiCasa dataset with invariant ratio $300$ outperformed the other invariant ratios with an excellent F1 score ($97.96\%$, $98.32\%$) and IoU ($95.99\%$, $96.7\%$). On the other hand, the class detection with an invariant ratio of $3^{-1}$ provided the most consistent F1 scores and IoU's across all classes, performing well on every class detection task and achieving the highest average F1 score ($95.27\%$).

The differences between invariant ratios become more apparent in the rotated CubiCasa and CVC datasets. On both datasets, classification with an invariant ratio of $300$ yielded the lowest average F1 scores and IoU values. 
For the rotated CubiCasa dataset, the results were not as good as those for the non-rotated CubiCasa dataset, but using an invariant ratio of $3^{-1}$ or $80^{-1}$ still showed promising performance as can be seen in the average F1 score ($87.25\%$, $87.64\%$). The F1 score and IoU for the window and door classes were highest when using the invariant ratio of $3^{-1}$ (window: $79.41\%$, $65.84\%$, door: $87.65\%$, $78.01\%$). Otherwise, the invariant ratio of $80^{-1}$ performed best with an average F1 score of $87.64\%$.

The results on the CVC dataset were less impressive overall.
Only the room and outer space classification exhibited reasonable performance when using an invariant ratio of $3^{-1}$ or $80^{-1}$. The highest F1 score for wall detection was achieved with an invariant ratio of $3^{-1}$, resulting in a value of $45.67\%$, while the highest IoU was $29.59\%$. In contrast, the window and door classification showed F1 scores and IoU values in the single digits.
The classification with invariant ratio $80^{-1}$ had the best average F1 score of $34.33\%$.

In Table \ref{tab:paper}, we compare the adjusted calculation of Zernike moments as described in Section \ref{RAG} with an invariant ratio of $80^{-1}$ versus without normalization steps. The average F1 score and average IoU when normalizing the polygons are significantly higher at $67.01\%$ and $56.09\%$, respectively, compared to $50.7\%$ and $38.16\%$ when leaving out the normalization steps. The wall, door, room, and outer space classes exhibit high F1 scores and IoU values when using an invariant ratio of $80^{-1}$. However, the object porch and stair detection show relatively low scores. When not normalizing only the stair class achieved a higher F1 score and IoU than when normalizing. In all other classes, our proposed method improved upon the original scores, particularly in window and door detection, where the scores more than double when normalizing.

\section{Discussion}
We added a normalization step in the feature extraction and established an invariant ratio to improve the Zernike moment calculation.
The normalization steps clearly improved rotation invariance as seen in the results on the rotated CubiCasa dataset. Interestingly, stair nodes were not classified correctly after rotation, which may be due to similarity with object nodes. Overall, the method was capable of classifying most indoor elements correctly despite never seeing rotated floor plans.
Furthermore, the performance on the rotated CubiCasa dataset improved clearly with smaller invariant ratios.

The generalization to an unseen dataset also improved with smaller invariant ratios but did not perform as well as on the CubiCasa dataset. 
Recall that a smaller invariant ratio corresponds to a smaller scaling factor, meaning more shape information is captured by Zernike moments. It's interesting to see that better generalization occurs with more visual information, as visuals differ between datasets. The overall poor generalization to the CVC dataset is likely due to the more complex RAG in the CVC dataset, caused by additional information such as rooftops, as they are depicted by schematic lines splitting room polygons. Also, some nodes in the RAG are represented by polygons containing areas of room and door, making labeling tasks ambiguous, e.g., doors with no enclosed area. This is due to the vectorization step only vectorizing connected areas. Text removal and possible symbol removal could be done with inpainting that reconstructs the covered spatial information for a preciser vectorization.

The vectorization process has its drawbacks, but it also offers advantages, including capturing the precision of floor plans with a level of detail that can surpass the provided ground truth.
A notable benefit is that the vectorization can be used to divide the entire floor plan into meaningful segments, enabling the creation of a RAG that includes relevant neighborhoods for real life applications. This has proven particularly useful in post-processing and offers potential for further integration with 3D models.

For floor plans with few additional technical details, e.g., escape and rescue plans, the vectorization may be enhanced with additional object detection or splitting vectors into smaller parts. On more technical drawings like the CVC dataset, a segmentation approach seems more suitable.

Moreover the presented wall splitting algorithm has shown desired wall splitting results on the cubicasa dataset but has not been evaluated with a metric or tested on complex buildings with, e.g., round walls.

\section{Conclusion}
We presented an end-to-end pipeline from input image to a RCG and indoor element classification, ready for use with 3D models. 
Our pipeline consists of four stages: pre-processing the input image to remove text and some additional information, a RAG generation with feature extraction, prediction of indoor element classes of RAG nodes using a graph neural network (GNN), and post-processing steps to extract the RCG and apply a wall splitting algorithm.
As part of our pipeline we introduced a novel rule based wall splitting algorithm that returns walls that can be associated with rooms and are convex.

Furthermore, we enhanced the feature extraction when generating the RAG by utilizing normalization steps and established an invariant ratio that provides a criterion to ensure that a polygon is fully captured when calculating Zernike moments. 
To evaluate the performance of the normalization and the influence of the invariant ratio, we performed experiments on the CubiCasa and CVC dataset, with different invariant ratios and an ablation study of the normalization method.
The performance on the rotated Cubicasa data increased significantly using these normalization steps, and further improvements were achieved by choosing a small enough invariant ratio.

%  As we could see in the experiements the RAG without information about the shape allready contains all the important information to classify the indoor elemts. Most approaches using e.g. CNN with sliding windows, can not acces the information that is contained within the RAG, meaning very deterministic information for floor plan recognition is not used. 
% Even if this approach seems promising on the small dataset and generalization, there are a few challanges to overcome. There are almost none aviable datasets for labeled vectorized datasets. The few aviable datasets for labeled floorplans are not in a fitting format to be easily converted to a labeled vectorization (they are missing information!). Noisy images or floor plan formats having doors that are not encloused lead to an imprecise vectorization. As seen in experiment the RAG creation is not roation invariant

% \cite{suvorov2021resolutionrobustlargemaskinpainting}

\bibliographystyle{splncs04}
\begin{credits}
    \ackname
     This version of the contribution has been 
    accepted for publication, after peer review but is not the Version of 
    Record and does not reflect post-acceptance improvements, or any corrections. 
    % The Version of Record is available online at: http://dx.doi.org/[insert DOI]. 
    Use of this Accepted Version is subject to the publisher’s Accepted Manuscript terms of use 
    https://www.springernature.com/gp/open-research/policies/accepted-manuscript-terms
    % \discintname
    % Disclosure of interest
    \end{credits}
\bibliography{literature}
\end{document}